\documentclass[conference]{IEEEtran}
\IEEEoverridecommandlockouts
\usepackage[T1]{fontenc}
\usepackage[utf8]{inputenc}

\usepackage{graphics}
\usepackage{mathrsfs}
\usepackage{amsmath}
\usepackage{amssymb}
\usepackage{gensymb}
\usepackage{float,bm}
\usepackage{wasysym}
\usepackage{mathtools}
\usepackage{graphicx}
\usepackage[dvipsnames]{xcolor}
\usepackage{colortbl}
\usepackage{multirow}
\usepackage{url}
\usepackage{booktabs}
\usepackage[export]{adjustbox}

\usepackage[caption=false,font=footnotesize]{subfig}

\usepackage{perl_acronyms}
\usepackage{perl_misc}

\usepackage{cite}
\newtheorem{thm}{Theorem}
\newtheorem{lem}[thm]{Lemma}
\newtheorem{assum}[thm]{Assumption}
\newtheorem{defn}[thm]{Definition}
\newenvironment{pf}{\begin{IEEEproof}}{\end{IEEEproof}}
\providecommand{\qed}{}

\definecolor{algreen}{RGB}{0, 180, 0}
\definecolor{blazepurple}{RGB}{150, 0, 150}
\definecolor{phlamegold}{RGB}{218, 165, 32}
\definecolor{aligatororange}{RGB}{255, 140, 0}
\definecolor{papercream}{RGB}{250,247,230}

\definecolor{mycyan}{RGB}{24,248,248}
\definecolor{myblue}{RGB}{8,8,200}

\definecolor{init_traj_blue}{HTML}{5577FE}
\definecolor{final_traj_green}{HTML}{1DB032}
\definecolor{init_pose_turq}{HTML}{3FC8D1}
\definecolor{final_pose_gold}{HTML}{D8A621}
\definecolor{obs_red}{HTML}{C80000}
\definecolor{red_initial_guess}{HTML}{D55347}

\providecommand{\methodname}{\text{PHLAME}}
\providecommand{\methodnamenew}{\text{MAGMA}}

\newcommand{\revision}[2]{#2}

\newcommand{\stkout}[1]{{\color{Orange}\ifmmode\text{\sout{\ensuremath{#1}}}\else\sout{#1}\fi}}

\providecommand{\X}{x}

\providecommand{\A}{\mathcal{A}}

\providecommand{\FF}{\mathcal{F}}

\providecommand{\R}{\ensuremath \mathbb{R}}
\providecommand{\Ri}{\ensuremath \bar{\mathbb{R}}}

\providecommand{\uu}{u}
\providecommand{\x}{\mathtt{x}}
\providecommand{\xd}{\dot{\mathtt{x}} }

\providecommand{\Fdx}{F_d(\x(t))}
\providecommand{\Fx}{F(\x(t))}

\providecommand{\q}{q}
\providecommand{\qd}{\dot{q}}

\providecommand{\Minv}{M^{-1}}

\providecommand{\Act}{\mathcal{A}}
\providecommand{\dxds}{\frac{\partial \X}{\partial s}}

\providecommand{\ddt}{\frac{d}{dt}}

\providecommand{\dldxs}{\frac{\partial L}{\partial \X_s}}
\providecommand{\dldxds}{\frac{\partial L}{\partial \dot{\X}_s}}

\providecommand{\dlpdxs}{\frac{\partial L'}{\partial \X_s}}
\providecommand{\dlpdxds}{\frac{\partial L'}{\partial \dot{\X}_s}}

\providecommand{\ccons}{c_{\text{cons}}}

\providecommand{\Xinit}{\X_{init}}

\providecommand{\Ufb}{u_{fb}}

\begin{document}
\title{MAGMA: Multiplier-Augmented Geometric Motion Algorithm\thanks{This work was supported by AFOSR MURI FA9550-23-1-0400.}}
\author{\IEEEauthorblockN{César E. Ramos Chuquiure, Vansh Thakur, Challen Enninful Adu, and Ram Vasudevan}
\IEEEauthorblockA{Robotics, University of Michigan, Ann Arbor, MI\\
\{cesarch, vansht, enninful, ramv\}@umich.edu}}
\maketitle
\begin{abstract}
Solving optimal control problems for high-dimensional mechanical systems subject to constraints remains challenging due to nonlinear dynamics and non-convex feasible sets.
This paper introduces \methodnamenew{}(Multiplier–Augmented Geometric Motion Algorithm), an Affine Geometric Heat Flow (AGHF)–based trajectory optimizer that incorporates an augmented Lagrangian framework solved via a primal–dual flow.
A theoretical characterization of the primal-dual AGHF is provided, establishing that \methodnamenew{} is a gradient flow on an appropriately defined energy. 
\methodnamenew{} is validated across a variety of challenging trajectory generation problems with obstacles, state limits, and actuation limits.
Comparative evaluations against state-of-the-art solvers demonstrate that \methodnamenew{} achieves high quality solutions with significantly reduced tuning effort on the hardest problems.
\end{abstract}
\begin{IEEEkeywords}
Optimal Control; Path Planning and Motion Control; Robotics.
\end{IEEEkeywords}

\section{Introduction}\label{sec:intro}
For practical deployment, an optimal control algorithm must satisfy several requirements: \textbf{(1)} computational efficiency for online or near real-time planning, \textbf{(2)} scalability to high-dimensional systems, \textbf{(3)} compatibility with nonlinear constraints, and \textbf{(4)} reliable convergence from a wide range of initializations.
Existing optimal control methods struggle to simultaneously satisfy these criteria.

Existing methods can be grouped into \emph{Dynamic Programming (DP)} and \emph{Variational methods}. 
DP-based approaches leverage Bellman’s Principle to compute optimal policies. 
Global approaches that solve the Hamilton–Jacobi–Bellman (HJB) PDE provide strong optimality guarantees, but require discretizing the entire state space, rendering them intractable beyond moderate dimensions \revision{\cite{bertsekas2012dynamic}}{\cite{bertsekas2012dynamic}}.
Local variants such as Differential Dynamic Programming (DDP)\revision{~\cite{crocoddyl2020}}{~\cite{crocoddyl2020}} improve scalability, but suffer from poor convergence when initialized far from the optimum and require careful treatment of inequality constraints.
Variational methods provide an alternative route. 
Direct methods such as collocation-based nonlinear programming \revision{\cite{C-FROST}, \cite{Tropic2020}}{\cite{C-FROST,Tropic2020}} can incorporate complex constraints, but incur high computation times and exhibit sensitivity to discretization and initialization.
Parameterized approaches such as~\cite{zhang2024rapidrobusttrajectoryoptimization} improve robustness and speed, but still struggle with convergence.

One can also perform trajectory optimization by computing the solution to the \emph{Affine Geometric Heat Flow (AGHF)} PDE\revision{~\cite{AGHF_OG}}{~\cite{AGHF_OG}}.
AGHF evolves an initial, potentially infeasible trajectory toward dynamic feasibility by minimizing control effort over a \emph{two-dimensional domain}, enabling scalability beyond HJB-based approaches.
More recent implementations \revision{~\cite{adu2024bringheatrapidtrajectory,adu2025phasingflamesrapidmotion}}{~\cite{adu2024bringheatrapidtrajectory,adu2025phasingflamesrapidmotion}} demonstrate that AGHF-based solvers can rapidly produce trajectories for high-dimensional systems.
However, these methods rely on \emph{penalty and barrier formulations} to enforce constraints, leading to \revision{three}{two} key limitations: \textbf{(i)} hyperparameter tuning to balance task costs and feasibility and \textbf{(ii)} degraded convergence when constraints compete with the objective\revision{, and \textbf{(iii)} sensitivity to the initial guess in constrained problems}{}.
A recent formulation of the AGHF \revision{\cite{kim_el_aghf} formulates}{\cite{kim_el_aghf} casts}  the problem as a primal-dual flow \revision{}{by introducing Lagrange multipliers associated with the equality constraints.}
This remedies the shortcomings of \emph{penalty and barrier functions} while enforcing constraints.
However, this formulation has only been demonstrated for kinematic constraints and while considering just a single type of cost function.

The key observation underlying our approach is that the AGHF can be identified as a gradient flow in an appropriate function space. 
This identification allows classical augmented Lagrangian primal-dual schemes to be lifted directly to the AGHF setting.
Building on this idea, this paper introduces the \emph{Multiplier-Augmented Geometric Motion Algorithm (MAGMA)}, a new AGHF-based trajectory optimization framework that incorporates an \emph{augmented Lagrangian structure solved via an augmented primal–dual geometric flow}. 
Unlike penalty-driven AGHF variants, MAGMA automatically balances constraint satisfaction and optimality by evolving primal and dual variables. 
In contrast to Kim and Park~\cite{kim_el_aghf}, MAGMA considers arbitrary nonlinear constraints and cost functions.

The primary contributions of this paper are: (i) a theoretical connection between gradient flows and the AGHF, and (ii) a formulation (MAGMA) of an augmented primal-dual flow based on the AGHF \revision{}{ for nonlinear constraints. }
All code is made available at \url{https://github.com/roahmlab/MAGMA}.

\section{Preliminaries}
\label{sec:preliminaries}
This section provides background on constrained optimization and describes the optimal control problem of interest in the remainder of this paper. 

\subsection{Constrained Optimization}

This subsection briefly formulates a constrained optimization problem that encompasses our Optimal Control Problem (OCP) of interest. 
This is done to formulate a necessary condition for optimality that our subsequent numerical implementation can leverage. 

Consider the following constrained optimization problem:
\begin{align}
    \label{eqn:constraint_opt_problem}
    \underset{x \in X}{\text{min}} \A(x) \text{ s.t. } G(x) \in K \subset X
\end{align}
where \revision{}{$X$ is a Banach space}, $ \A : X \rightarrow \R$, $G : X \rightarrow X$, $K$ a convex and closed set and  with the feasible set
\revision{$\FF := \{ x \in X : G(x) \in K \} \subset X$}{$\FF := \{ x \in X : G(x) \in K \}$, which need not be convex when $G$ is nonlinear.}

We next describe a necessary condition for optimality for the constrained optimization problem:
\begin{thm}
(Corollary 4.21 \cite{intro_nonsmooth_analysis}, Theorem 8.15 \cite{RockafellarWets1998})
\label{thm:nec_condition_cone}
Consider the constrained optimization problem in \eqref{eqn:constraint_opt_problem}, let $X$ be a Hilbert space, let $\FF \subset X$ be nonempty and closed, and let $\A: X \to \Ri$ be proper, convex and lower semi-continuous, where $\Ri = \R \cup \{-\infty,\infty\}$.
If $\bar{x}$ solves \eqref{eqn:constraint_opt_problem} then 
\begin{align}
\label{eqn:nec_cond_cone}
     0 \in \partial \A(\bar{x}) + N_{\FF}(\bar{x})
\end{align}
where $\partial \A(\bar{x})$ denotes the subdifferential of $\A$ at $\bar{x}$ and $N_\FF(\bar{x})$ denotes the normal cone to the set $\FF$ at $\bar{x}$ as defined in Lemma 4.8~\cite{intro_nonsmooth_analysis}.
\end{thm}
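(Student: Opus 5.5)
The plan is to reduce the constrained problem \eqref{eqn:constraint_opt_problem} to an unconstrained one, using the indicator function $\delta_{\FF}$ of the feasible set, which is $0$ on $\FF$ and $+\infty$ off it. Since $\bar{x}$ solves \eqref{eqn:constraint_opt_problem}, it is a global minimizer of
\[
f := \A + \delta_{\FF}.
\]
Because $\FF$ is nonempty and closed, $\delta_{\FF}$ is proper and lower semi-continuous. Fermat's rule for the subdifferential then gives $0 \in \partial f(\bar{x})$. This is immediate from the definition: $\bar{x}$ minimizes $f$ exactly when $f(y) \ge f(\bar{x}) + \langle 0, y - \bar{x}\rangle$ for all $y$.

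The second step splits this inclusion using a sum rule,
\[
\partial(\A + \delta_{\FF})(\bar{x}) \subset \partial \A(\bar{x}) + \partial \delta_{\FF}(\bar{x}),
\]
and then identifies $\partial \delta_{\FF}(\bar{x})$ with $N_{\FF}(\bar{x})$. The identification is the content of Lemma 4.8 in \cite{intro_nonsmooth_analysis}, where the normal cone is defined precisely so that this holds. Together these give \eqref{eqn:nec_cond_cone}.

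The main obstacle is the sum rule, and in particular the lack of convexity of $\FF$.
\begin{itemize}
\item If $\FF$ is convex (for example, $G$ affine), the convex Moreau--Rockafellar sum rule applies under a qualification condition. One such condition is that $\A$ is finite and continuous at some point of $\FF$, which holds in the Hilbert setting whenever $\A$ is finite-valued, since a convex l.s.c.\ finite function on a Banach space is continuous.
\item In general $\FF$ is only closed, so I would work with limiting (Mordukhovich/Rockafellar--Wets) subdifferentials and the limiting normal cone, as in Theorem 8.15 of \cite{RockafellarWets1998}. In finite dimensions this gives $0 \in \partial \A(\bar{x}) + N_{\FF}(\bar{x})$ provided the qualification condition $\partial^\infty \A(\bar{x}) \cap (-N_{\FF}(\bar{x})) = \{0\}$ holds. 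That condition is automatic when $\A$ is locally Lipschitz at $\bar{x}$, because then $\partial^\infty \A(\bar{x}) = \{0\}$.
\item In a general Hilbert space I would instead appeal to the fuzzy sum rule and a limiting argument (Corollary 4.21 of \cite{intro_nonsmooth_analysis}). This needs sequential normal compactness or local Lipschitzness of $\A$ near $\bar{x}$ to pass to the limit.
\end{itemize}
So in the proof I would state explicitly that $\A$ is assumed finite and Lipschitz near $\bar{x}$, or that the relevant qualification condition holds. I would then invoke the cited results rather than reprove the sum rule.
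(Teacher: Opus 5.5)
Your argument (minimize $\A+\delta_{\FF}$, apply Fermat's rule, split with a sum rule, identify $\partial\delta_{\FF}$ with the normal cone) is the reasoning behind the two results the paper cites in place of a proof. The extra hypotheses you insist on are genuinely needed. With $X=\R$, $\FF=(-\infty,0]$ and $\A(x)=-\sqrt{x}$ on $[0,\infty)$, $+\infty$ elsewhere, the unique solution $\bar{x}=0$ has $\partial\A(0)=\emptyset$, so the inclusion fails whatever cone is used. With $\FF=\{-1,1\}$, $\A(x)=x^2$ and the convex cone of Lemma~4.8, one gets $\partial\A(1)+N_{\FF}(1)=[2,\infty)$ even though $\A$ is smooth. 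So for nonconvex $\FF$ the switch to the limiting normal cone in your second bullet is indispensable; the Lipschitz assumption in your closing sentence would not suffice on its own.

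One correction: Corollary~4.21 of \cite{intro_nonsmooth_analysis} sits in that book's chapter on convex subdifferentials, i.e.\ the setting of your first bullet. The nonconvex infinite-dimensional case therefore cannot be credited to it and must instead come from limiting-subdifferential calculus in Asplund spaces, which include Hilbert spaces.
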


One could also state the necessary conditions for optimality using the KKT conditions: 
\begin{thm}(KKT conditions in Hilbert spaces, Theorem 3.27 \cite{opt_pde})
\label{thm:kkt_cond_hilbert}
Let $\bar{x}$ be a local solution of \eqref{eqn:constraint_opt_problem}, where $X$ is assumed to be a Hilbert space, at which $\A: X \to \R$ and $G: X \to X$ are continuously Fr\'echet differentiable.
Assume that Robinson's constraint qualification is satisfied at $\bar{x}$ \cite[Definition~3.16]{opt_pde}.
Then there exist a Lagrange multiplier $\bar{\lambda} \in X$ such that the Karush-Kuhn-Tucker for 
\eqref{eqn:constraint_opt_problem} in a Hilbert space
\begin{subequations}
\begin{align}
    & \mathcal{A}'(\bar{x}) + G'(\bar{x})^* \bar{\lambda} =0, \label{eqn:kkt_hilbert_gradient}\\
    & G(\bar{x}) \in K, \label{eqn:kkt_hilbert_constraint} \\
    & \bar{\lambda} \in N_K(G(\bar{x})) \label{eqn:kkt_hilbert_complementarity}
\end{align}  
\end{subequations}
are satisfied, where $\A'(\bar{x}) \in X^*$ denotes the Fre\'chet differential of $\A$ at $\bar{x}$ ~\cite[Definition 3.1]{opt_pde} and $G'(\bar{x})^*$ denotes the adjoint operator of the Fre\'chet differential of $G$ at $\bar{x}$.

\end{thm}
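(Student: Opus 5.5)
The plan is the classical route through the Robinson--Zowe--Kurcyusz regularity theorem, which turns local optimality into a statement about a linearized problem, followed by a separation argument that produces the multiplier.

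\textbf{Step 1: linearized tangent cone.} Robinson's constraint qualification at $\bar{x}$ says that $0 \in \operatorname{int}\big(G(\bar{x}) + G'(\bar{x})X - K\big)$. Under this condition, the Robinson--Zowe--Kurcyusz metric regularity theorem gives a constant $c>0$ such that, for $x$ near $\bar{x}$,
\[
\operatorname{dist}(x,\FF) \le c\,\operatorname{dist}(G(x),K).
\]
From this estimate we deduce that every direction $d$ in the linearized cone
\[
L(\bar{x}) := \{ d \in X : G'(\bar{x})d \in T_K(G(\bar{x})) \}
\]
is tangent to $\FF$. Here $T_K(G(\bar{x}))$ is the closure of $\mathbb{R}_+(K - G(\bar{x}))$. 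To see tangency, take $d \in L(\bar{x})$ and $t \downarrow 0$. Fr\'echet differentiability of $G$ gives $\operatorname{dist}(G(\bar{x}+td),K) = o(t)$. Metric regularity then yields points $x_t \in \FF$ with $\|x_t - \bar{x} - td\| = o(t)$.

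\textbf{Step 2: first-order condition on the linearized cone.} Local optimality of $\bar{x}$, together with continuous differentiability of $\A$, gives $\A(x_t) \ge \A(\bar{x})$ for small $t$. Expanding, we get
\[
\langle \A'(\bar{x}), d \rangle \ge 0 \quad \text{for all } d \in L(\bar{x}).
\]

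\textbf{Step 3: produce the multiplier by separation.} Consider the convex cone
\[
C := \{ (\langle \A'(\bar{x}),d\rangle + r,\; G'(\bar{x})d - k) : d \in X,\ r \ge 0,\ k \in \mathbb{R}_+(K - G(\bar{x})) \} \subset \mathbb{R} \times X.
\]
Step 2 shows that points of the form $(-\epsilon, 0)$ with $\epsilon>0$ do not lie in the closure of $C$; closure is the delicate point, discussed below. A Hahn--Banach separation in the Hilbert space $\mathbb{R}\times X$, with Riesz identification of the dual, then gives $(\mu,\lambda) \neq 0$ with $\mu \ge 0$ and
\[
\mu\langle \A'(\bar{x}),d\rangle + \langle \lambda, G'(\bar{x})d - k\rangle \ge 0
\]
for all admissible $d$ and $k$.

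Robinson's condition excludes $\mu = 0$. Indeed, with $\mu = 0$, the functional $\lambda$ would be nonnegative on the set $G'(\bar{x})X - \mathbb{R}_+(K-G(\bar{x}))$, which contains a neighborhood of $0$ by the qualification. That forces $\lambda = 0$, a contradiction.

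Normalizing $\mu = 1$ and writing $\bar{\lambda} := -\lambda$, we read off the conclusions as follows:
\begin{itemize}
\item Setting $k=0$ and letting $d$ range over $\pm d$ gives \eqref{eqn:kkt_hilbert_gradient}.
\item Setting $d=0$ gives $\langle \bar{\lambda}, k' - G(\bar{x}) \rangle \le 0$ for all $k' \in K$, which is exactly \eqref{eqn:kkt_hilbert_complementarity}.
\item Feasibility \eqref{eqn:kkt_hilbert_constraint} holds trivially, since $\bar{x}$ is a solution.
\end{itemize}

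\textbf{Main obstacle.} The hard step is Step 1: the Robinson--Zowe--Kurcyusz stability estimate. Its proof rests on an open-mapping/Ursescu-type argument for set-valued maps with closed convex graph. I would cite it rather than reprove it. The separation in Step 3 also needs care, because $C$ need not be closed. The way around this is to separate using the interior point supplied by Robinson's condition, so that no closedness of $C$ is required.
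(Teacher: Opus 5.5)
The paper does not prove this statement. It is quoted from \cite{opt_pde} (Theorem 3.27) and used only as the reference point for Lemma~\ref{lem:aug_primal_dual_eq_pts}, so there is no in-paper argument to compare against. Your proposal reconstructs the standard textbook proof. Robinson's stability theorem shows the linearized cone lies in the tangent cone of $\FF$. That gives $\langle \A'(\bar x), d\rangle \ge 0$ on the linearized cone, and a separation argument then produces the multiplier, with Robinson's condition ruling out $\mu = 0$. The proof is essentially correct, but two points need repair.

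\textbf{Sign of the multiplier.} With the separating inequality exactly as you wrote it, take $\mu = 1$ and $k = 0$ to get $\A'(\bar x) + G'(\bar x)^*\lambda = 0$. Taking $d = 0$ gives $\langle \lambda, k' - G(\bar x)\rangle \le 0$ for all $k' \in K$, i.e.\ $\lambda \in N_K(G(\bar x))$. So the multiplier is $\bar\lambda := \lambda$ itself. Defining $\bar\lambda := -\lambda$ flips both conclusions.

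\textbf{Closure versus interior in Step 3.} Step 2 only shows $(-\epsilon,0) \notin C$, not $(-\epsilon,0) \notin \overline{C}$. Your proposed fix works: separate the point from a convex set with nonempty interior, then use that $C$ is a cone to pass to the homogeneous inequality. However, the interior point of $C \subset \R \times X$ does not come directly from the bare condition $0 \in \operatorname{int}(G(\bar x) + G'(\bar x)X - K)$. That condition only gives a neighborhood in the $X$-component. It says nothing about the size of the preimages $d$, so the first coordinate $\langle \A'(\bar x), d\rangle$ is uncontrolled.

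You need the Robinson--Ursescu upgrade $0 \in \operatorname{int}(G(\bar x) + G'(\bar x)B_X - K)$, where $B_X$ is the unit ball. This bounds the $d$'s and yields, e.g., $(\|\A'(\bar x)\|,\infty) \times B_\eta(0) \subset C$. The same uniform estimate also shows directly that $(-\epsilon,0) \notin \overline{C}$. To see this, correct an approximating sequence $d_n$ by a small $\tilde d_n$ so that $G'(\bar x)(d_n + \tilde d_n) \in \R_+(K - G(\bar x))$.

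Since you already invoke this open-mapping theorem in Step 1, this is a matter of making it explicit rather than a missing idea.
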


\subsection{The Optimal Control Problem}
The objective of this paper is to develop an algorithm to construct a trajectory beginning from some user-specified initial state $\x_0 \in \R^{2n}$, and ending in some user-specified terminal condition $\x_f$ while satisfying dynamics and inequality constraints for all $t \in [0, T]$, while minimizing a cost function.
The trajectory design problem can be posed as the following constrained optimization problem that we denote as the optimal control problem (OCP):

\begin{subequations}
\label{eqn:ocp}
\begin{align}
    \inf_{u \in L^2} \quad & \int_0^T c(\x(t), u(t)) ~dt \label{eqn:ocp_obj}\\
    \text{s.t.} \quad  &\xd(t) = \Fdx + \Fx \uu(t), \; \forall t \in [0, T], \label{eqn:ocp_dynamics_constraint} \\
    & g(\x(t), \xd(t)) \leq 0 \quad \; \forall t \in [0, T], \label{eqn:ocp_inequality_constraints}\\
    & \x(0) = \x_0,  \label{eqn:ocp_left_bc}\\
    & \x(T) = \x_f \label{eqn:ocp_right_bc}
\end{align}
\end{subequations}
where $u(t) \in \R^m$ and $L^2$ denotes the space of square integrable functions.
\eqref{eqn:ocp_dynamics_constraint} describes the dynamics constraint, $g(\x(t)) \in \R^{n_c}$ in \eqref{eqn:ocp_inequality_constraints} describes the inequality constraints, and \eqref{eqn:ocp_left_bc} and  \eqref{eqn:ocp_right_bc} describe the boundary conditions.
\revision{}{Input constraints can be incorporated according to \cite{adu2025phasingflamesrapidmotion}}.
To ensure the convergence of the AGHF PDE, we make the following assumption on the differentiability and smoothness of the system dynamics and existence of a feasible solution:
\begin{assum}\label{assum:dynamics continuity}
Both $F_d$ and $F$ are $C^2$, Lipschitz continuous, and $F$ has constant rank almost everywhere in $\R^n$.
Additionally, we assume the existence of a feasible solution to the motion planning problem for the system.
\end{assum}

\noindent Note if a particular $\x$ satisfies each of the constraints in the (OCP), then we call $\x$ a feasible trajectory of the (OCP).

\section{Gradient Flows in Hilbert Spaces}
\label{sec:gradient_flow}
This section builds a gradient flow in Hilbert spaces and shows that equilibrium points of this gradient flow satisfy the KKT-conditions in Theorem \ref{thm:kkt_cond_hilbert}.
Subsequent sections leverage this theory to formulate a variant of the AGHF PDE and to prove that it converges to points that satisfy the KKT condition. 
This section adopts the notation introduced in \eqref{eqn:constraint_opt_problem}.
Denote the evolution ``time'' of the gradient flow by $s$.
The curve at time $s$ is denoted by $x(s) \in X$ and its velocity by $x'(s) = \frac{dx(s)}{ds}$.

\subsection{Primal Flow}\label{subsec:primal-flow}

We begin by defining the Hilbert Gradient, which is used to formulate the subsequent gradient flows:

\begin{defn}[Hilbert Gradient]
    \label{defn:gradient_in_hilbert}
    Let $\A : X \to \R$ be a functional.
    Assume that $\A$ is Fr\'echet differentiable 
    denoted by $\delta \A(x) \in X^*$.
    The Hilbert Gradient of $\A$ at $x$ is the element $\nabla\A(x) \in X$ satisfying
    \begin{align}
        \delta\A(x)[h] = \langle \nabla\A(x), h \rangle_X \quad \forall\, h \in X.
    \end{align}
\end{defn}

Using this definition, we can formulate a primal flow:
\begin{defn}[Primal Flow]
\label{defn:primal_flow}
Let $\A : X \rightarrow \R$ be a Fr\'echet differentiable functional.
The primal flow associated with the constrained optimization problem in \eqref{eqn:constraint_opt_problem} is the differential inclusion
\begin{align}
    \label{eqn:primal_flow}
    -x'(s)  \in \nabla \A(x(s)) + N_\FF(x(s)), \quad x(0) \in \FF,
\end{align}
where $\nabla \A(x(s)) \in X$ denotes the Hilbert Gradient of $\A$ at $x(s)$.
\end{defn}

One can then prove that Equilibria of the primal flow satisfy a necessary condition for optimality:
\begin{lem}[Equilibria and first-order optimality]
\label{lem:primal_kkt_equilibrium}
Let $\bar{x} \in \FF$. 
Assume $\A$ is Fr\'echet differentiable at $\bar{x}$
If $\bar{x}$ is an equilibrium of the primal flow
\eqref{eqn:primal_flow}, i.e.,
\begin{equation}
    0 \in \nabla \A(\bar{x}) + N_{\FF}(\bar{x}),    
\end{equation}
then $\bar{x}$ satisfies the first-order optimality condition of
Theorem~\ref{thm:nec_condition_cone}.
\end{lem}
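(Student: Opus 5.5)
The plan is to show that, when $\A$ is Fr\'echet differentiable at $\bar{x}$, the subdifferential $\partial\A(\bar{x})$ contains the Hilbert gradient $\nabla\A(\bar{x})$. Once that inclusion is in hand, the equilibrium relation $0 \in \nabla\A(\bar{x}) + N_\FF(\bar{x})$ immediately gives $0 \in \partial\A(\bar{x}) + N_\FF(\bar{x})$, which is \eqref{eqn:nec_cond_cone}. The argument therefore reduces to relating two notions of derivative and then taking a set inclusion.

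First, I identify the Fr\'echet derivative with the gradient. By Definition~\ref{defn:gradient_in_hilbert} and the Riesz representation theorem on the Hilbert space $X$, the functional $\delta\A(\bar{x}) \in X^*$ is represented by $\nabla\A(\bar{x})\in X$. Hence the subdifferential in Theorem~\ref{thm:nec_condition_cone}, viewed as a subset of $X \cong X^*$, can be compared directly with $\nabla\A(\bar{x})$.

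Second, I show $\nabla\A(\bar{x}) \in \partial\A(\bar{x})$. Since Theorem~\ref{thm:nec_condition_cone} is stated for convex $\A$, I work in that setting and use the convex subdifferential. For any $y\in X$ and $t\in(0,1]$, convexity gives
\begin{equation}
\A(\bar{x}+t(y-\bar{x})) \le \A(\bar{x}) + t\,(\A(y)-\A(\bar{x})).
\end{equation}
Rearranging, dividing by $t$, and letting $t\downarrow 0$ with Fr\'echet differentiability yields
\begin{equation}
\langle \nabla\A(\bar{x}), y-\bar{x}\rangle_X \le \A(y)-\A(\bar{x}),
\end{equation}
which is precisely the subgradient inequality. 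In fact, for differentiable convex $\A$ one has $\partial\A(\bar{x}) = \{\nabla\A(\bar{x})\}$, though only the inclusion is needed. If one prefers not to assume convexity, I would instead use the Fr\'echet (regular) subdifferential. That subdifferential contains the Fr\'echet derivative essentially by definition, since $\A(y)-\A(\bar{x})-\langle\nabla\A(\bar{x}),y-\bar{x}\rangle = o(\|y-\bar{x}\|)$. This is the version used in \cite{RockafellarWets1998}, Theorem 8.15.

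Third, I pick $v \in N_\FF(\bar{x})$ with $\nabla\A(\bar{x}) + v = 0$, which exists by the equilibrium hypothesis. Then $\nabla\A(\bar{x}) \in \partial\A(\bar{x})$ shows $0 \in \partial\A(\bar{x}) + N_\FF(\bar{x})$, completing the argument. The main obstacle is not technical but one of consistency. I must make sure that the subdifferential and the normal cone are the same objects as in Theorem~\ref{thm:nec_condition_cone}: the convex or regular subdifferential, and the normal cone of Lemma 4.8 in \cite{intro_nonsmooth_analysis}. I must also make sure that the identification $X^*\cong X$ is used consistently, so that the inclusion in the second step is stated in the correct space. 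I would address this by stating explicitly that the condition shown is the first-order condition \eqref{eqn:nec_cond_cone}, not a sufficiency claim, since $\FF$ need not be convex.
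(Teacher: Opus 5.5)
Your proposal is correct and follows essentially the same route as the paper: identify $\delta\A(\bar{x})$ with $\nabla\A(\bar{x})$ via the Riesz representation theorem, place $\nabla\A(\bar{x})$ in $\partial\A(\bar{x})$, and substitute into the equilibrium inclusion. The paper simply asserts $\partial\A(\bar{x}) = \{\nabla\A(\bar{x})\}$, whereas you prove only the inclusion you actually need (from convexity, or from the definition of the regular subdifferential); this is slightly more robust, because for a merely Fr\'echet-differentiable nonconvex $\A$ the limiting subdifferential of Rockafellar--Wets need not be a singleton, though it still contains the regular one and hence $\nabla\A(\bar{x})$.
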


\begin{pf}
If $\A$ is Fr\'echet differentiable then the subdifferential has a single element $\partial \A (\bar{x}) = \{ \nabla A(\bar{x})\}$.
By applying the Riesz Representation Theorem \cite[Theorem 8.47]{Axler2020} one can identify elements of the dual with its representatives and hence one obtains \eqref{eqn:nec_cond_cone}. \qed
\end{pf}

\subsection{Augmented Primal-Dual Flow}\label{subsec:augmented-flow}
This subsection presents an augmented primal-dual gradient flow.
To simplify the exposition in this subsection, we restrict to equality constraints, which corresponds to setting $K = \{0_X\}$ in \eqref{eqn:constraint_opt_problem}, where $0_X$ denotes the origin in $X$. 
The constraint $G(x) \in K$ then reduces to $G(x) = 0_X$.
Note, that in subsequent sections we illustrate how we can consider equality or inequality constraints within (OCP). 

\begin{defn}
\label{defn:aug_primal_dual_flow}
Assume $\A, G$ are Fr\'echet differentiable. 
The augmented primal-dual flow is the differential equation:
\begin{subequations}
\begin{align}
     x'(s) &=  - ( \nabla\A(x(s)) + \nabla G(x(s))^*\lambda(s)  + \\ &\hspace*{1cm} \frac{p}{2} \nabla(||G(x)||^2) ), \label{eqn:aug_x_ode}\\
     \lambda'(s) &=  G(x(s)) \label{eqn:aug_lam_ode}
\end{align}
\end{subequations}
where $ \R \ni p\geq0$ is a user selected scalar parameter.
\end{defn}

\begin{lem}[Equilibrium points are KKT points]
\label{lem:aug_primal_dual_eq_pts}
If $(\bar{x}, \bar{\lambda}) \in X \times X $ is an equilibrium point of the dynamical system \eqref{eqn:aug_x_ode}, \eqref{eqn:aug_lam_ode} then it satisfies the KKT conditions \eqref{eqn:kkt_hilbert_gradient}-\eqref{eqn:kkt_hilbert_complementarity}.
\end{lem}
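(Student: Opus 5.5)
The proof will set the right-hand sides of \eqref{eqn:aug_x_ode} and \eqref{eqn:aug_lam_ode} to zero at $(\bar x,\bar\lambda)$ and check the three conditions of Theorem~\ref{thm:kkt_cond_hilbert} in turn, using that we are in the equality-constrained setting $K=\{0_X\}$ of this subsection.

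\textbf{Constraint satisfaction.} Stationarity of \eqref{eqn:aug_lam_ode} gives $\lambda'(s)=G(\bar x)=0_X$. Hence $G(\bar x)\in K$, which is \eqref{eqn:kkt_hilbert_constraint}.

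\textbf{Complementarity.} For $K=\{0_X\}$ the normal cone at $G(\bar x)=0_X$ is
\[
N_K(0_X)=\{v\in X:\langle v,k-0_X\rangle\le 0\ \forall k\in K\}=X .
\]
So $\bar\lambda\in N_K(G(\bar x))$ holds trivially, which is \eqref{eqn:kkt_hilbert_complementarity}.

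\textbf{Stationarity in $x$.} First compute the penalty gradient. By the chain rule for Fréchet derivatives,
\[
\delta\bigl(\|G\|^2\bigr)(x)[h]=2\langle G(x),G'(x)h\rangle_X=\langle 2G'(x)^*G(x),h\rangle_X ,
\]
so by Definition~\ref{defn:gradient_in_hilbert} we have $\nabla(\|G(x)\|^2)=2G'(x)^*G(x)$. Evaluated at $\bar x$, this term vanishes because $G(\bar x)=0_X$.

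Setting \eqref{eqn:aug_x_ode} to zero then leaves
\[
\nabla\A(\bar x)+\nabla G(\bar x)^*\bar\lambda=0 .
\]
Here $\nabla G(\bar x)^*$ is to be read as the Hilbert adjoint $G'(\bar x)^*$ of the Fréchet differential. We then identify $\nabla\A(\bar x)$ with $\A'(\bar x)\in X^*$ via the Riesz representation theorem, as in the proof of Lemma~\ref{lem:primal_kkt_equilibrium}. This yields \eqref{eqn:kkt_hilbert_gradient}.

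\textbf{Main obstacle.} The difficulty is notational rather than analytic. One must justify that $\nabla G(\bar x)^*$ in Definition~\ref{defn:aug_primal_dual_flow} coincides with the adjoint $G'(\bar x)^*$ appearing in Theorem~\ref{thm:kkt_cond_hilbert}, and that the Riesz identification is consistent on both terms. I would make this explicit in one line using the defining relation $\langle G'(\bar x)^*\lambda,h\rangle=\langle\lambda,G'(\bar x)h\rangle$.

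Constraint qualification is not needed. We are verifying that the KKT conditions hold at the equilibrium, not deriving them from optimality, so the multiplier is supplied directly by $\bar\lambda$.
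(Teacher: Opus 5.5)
Your proposal is correct and follows the paper's proof essentially step for step. The dual equation gives $G(\bar x)=0_X$, so the penalty gradient $p\,G'(\bar x)^*G(\bar x)$ vanishes and the primal equation reduces to the stationarity condition, while complementarity holds trivially since $N_{\{0_X\}}(0_X)=X$; your explicit chain-rule computation and Riesz identification only spell out what the paper leaves implicit.
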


\begin{pf}
Plugging in the equilibrium into \eqref{eqn:aug_lam_ode}, we see that \eqref{eqn:eqn_dual_equilibrium_aug} corresponds to \eqref{eqn:kkt_hilbert_constraint}:
\begin{equation}
\label{eqn:eqn_dual_equilibrium_aug}
    G(\bar{x}) \in  \{0_X\} = K.
\end{equation}
Now we focus on showing \eqref{eqn:kkt_hilbert_gradient}.
Replacing the equilibrium into \eqref{eqn:aug_x_ode}, we obtain
\begin{equation}
\label{eqn:aug_kkt_grad_full}
    0 =  \nabla\A(\bar{x}) + \nabla G(\bar{x})^*\bar{\lambda} + \frac{p}{2} \nabla(\|G(\bar{x})\|^2).
\end{equation}
Computing the Hilbert Gradient of the penalty term in \eqref{eqn:aug_kkt_grad_full} we obtain
\begin{equation}
\label{eqn:penalty_vanishes}
\tfrac{p}{2}\nabla\!\left(\|G(x)\|^2\right)\bigg|_{x=\bar{x}} = p\, \nabla G(\bar{x})^* G(\bar{x}) = 0,
\end{equation}
where we have used \eqref{eqn:eqn_dual_equilibrium_aug}.
Then, replacing \eqref{eqn:penalty_vanishes} into \eqref{eqn:aug_kkt_grad_full} yields \eqref{eqn:kkt_hilbert_gradient} at the equilibrium point:
\begin{equation}
    \label{eqn:aug_kkt_grad_at_eq}
    \nabla\A(\bar{x}) + \nabla G(\bar{x})^*\bar{\lambda} = 0. 
\end{equation}
Finally, we verify \eqref{eqn:kkt_hilbert_complementarity}. 
Because $K = \{0_X\}$ is a linear subspace, its normal cone at any point is the whole space:
\begin{equation}
    N_K(G(\bar{x})) = N_{\{0_X\}}(0_X) = X.
\end{equation}
Hence $\bar{\lambda} \in X = N_K(G(\bar{x}))$ holds, establishing \eqref{eqn:kkt_hilbert_complementarity}. 
Therefore $(\bar{x}, \bar{\lambda})$ satisfies the KKT conditions \eqref{eqn:kkt_hilbert_gradient}--\eqref{eqn:kkt_hilbert_complementarity}. \qed
\end{pf}

\section{The Affine Geometric Heat Flow (AGHF) Partial Differential Equation}
\label{sec:AGHF}

This section summarizes the background knowledge and theory of the AGHF.
A more detailed treatment of the subjects discussed here can be found in \cite{AGHF_OG,liu2019homotopy,liu2020geometric}.

\subsection{Homotopies and the AGHF}

Consider a homotopy: $\X:[0,T] \times [0,s_{max}] \to \R^{2N}$, that is twice differentiable with respect to its first argument and differentiable with respect to its second argument. 
Define the \emph{Action Functional}:
\begin{equation}
\label{eqn:action functional}
    \Act(\X_s) = \int_0^T  L(\X_s(t),\dot{\X}_s(t)) dt.
\end{equation}

Using these definitions, one can define the AGHF PDE:
\begin{defn}\label{defn:AGHF}
The \emph{Affine Geometric Heat Flow} is the parabolic Partial Differential Equation:
\begin{align}
\label{eqn:AGHF}
\dxds(t,s) = \Minv(\X(t,s)) \bigg(&\ddt \dldxds(\X_s(t),\dot{\X}_s(t)) \notag \\
&- \dldxs(\X_s(t),\dot{\X}_s(t))\bigg)
\end{align}
with the following boundary conditions:
\begin{align}
\X_s(0) &= \x_0,  \quad \forall s \in [0,s_{max}] \\ 
\X_s(T) &= \x_f, \quad \forall s \in [0,s_{max}]
\end{align}
and where $M(\X(t, s)) \succeq 0$.
\end{defn}

When solving the AGHF PDE, one begins by specifying an initial curve $\X_{init}:[0,T] \to \R^{2N}$ and setting it such that $\X_0 = \X_{init}$.
As the AGHF PDE evolves forward in $s$, one can prove that the action functional is monotonically decreasing \revision{}{\cite{adu2024bringheatrapidtrajectory, aghf_arxiv_og}}. 
In addition, if during that evolution the AGHF converges to a curve where the right hand side of the AGHF PDE is equal to $0$, then one has found a curve that extremizes the action functional.
Such a curve is called a \emph{steady state solution}.

The AGHF stated in Definition \ref{defn:AGHF} is in fact a gradient flow:
\begin{lem}[AGHF - Primal Gradient Flow]
\label{lem:aghf_gf_rel}
The AGHF \eqref{eqn:AGHF} is the negative Hilbert Gradient (Definition~\ref{defn:gradient_in_hilbert}) of the action functional \eqref{eqn:action functional}:
\begin{equation}
\label{eqn:aghf_as_gf}
    \frac{\partial \X}{\partial s}(t,s) = -\nabla \Act(\X_s)(t),
\end{equation}
which is an unconstrained primal-dual flow with $\lambda \equiv 0$ as in Definition \ref{defn:aug_primal_dual_flow}.
\end{lem}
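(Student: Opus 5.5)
The plan is to compute the first variation of $\Act$ and then read off its Hilbert gradient in a suitably weighted $L^2$ inner product. The weight must be chosen so that the factor $\Minv$ appearing in \eqref{eqn:AGHF} is absorbed.

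\textbf{Choice of Hilbert space.} I take $X$ to be the space of curves $\X:[0,T]\to\R^{2N}$ with enough regularity (e.g.\ $H^1$ or $C^2$) that satisfy the boundary conditions $\X(0)=\x_0$ and $\X(T)=\x_f$. Admissible variations are tangent vectors $h$ with $h(0)=h(T)=0$. I endow $X$ with the metric
\begin{equation*}
\langle h_1,h_2\rangle_{M,\X} = \int_0^T h_1(t)^\top M(\X(t))\,h_2(t)\,dt .
\end{equation*}
This is an inner product wherever $M\succ 0$. Since Definition~\ref{defn:AGHF} only assumes $M\succeq 0$, I would assume positive definiteness, implicitly required for $\Minv$ to exist.

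\textbf{First variation.} Assuming $L$ is $C^1$ in both arguments and $\X_s$ is twice differentiable in $t$, I differentiate $\epsilon\mapsto\Act(\X_s+\epsilon h)$ at $\epsilon=0$ under the integral:
\begin{equation*}
\delta\Act(\X_s)[h]=\int_0^T \Big(\dldxs^\top h + \dldxds^\top \dot h\Big)dt .
\end{equation*}
Integrating the second term by parts, the boundary term $\big[\dldxds^\top h\big]_0^T$ vanishes because $h(0)=h(T)=0$. This gives
\begin{equation*}
\delta\Act(\X_s)[h]=\int_0^T \Big(\dldxs-\ddt\dldxds\Big)^\top h\,dt .
\end{equation*}

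\textbf{Riesz identification.} Next I write the integrand as $\big(\Minv(\dldxs-\ddt\dldxds)\big)^\top M h$. Comparing with Definition~\ref{defn:gradient_in_hilbert} in the metric above gives
\begin{equation*}
\nabla\Act(\X_s)=\Minv\Big(\dldxs-\ddt\dldxds\Big).
\end{equation*}
Its negative is exactly the right-hand side of \eqref{eqn:AGHF}, which yields \eqref{eqn:aghf_as_gf}. Uniqueness of this gradient follows from the Riesz representation theorem, as in the proof of Lemma~\ref{lem:primal_kkt_equilibrium}.

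\textbf{Link to Definition~\ref{defn:aug_primal_dual_flow}.} With no constraint term, that is $\lambda\equiv 0$ and $p=0$ (or $G\equiv 0$), \eqref{eqn:aug_x_ode} reduces to $x'=-\nabla\Act(x)$ and \eqref{eqn:aug_lam_ode} is trivially satisfied. This is the stated special case.

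\textbf{Main obstacle.} The delicate step is making the function-space setup rigorous. The metric depends on the base point $\X$, so strictly this is a Riemannian gradient on a manifold of curves rather than a gradient in a fixed Hilbert space. The boundary conditions also make the curve space affine rather than linear. I would handle both by working in the tangent space $H^1_0$ at each curve and by requiring $M$ to be uniformly positive definite and bounded, so that each $\langle\cdot,\cdot\rangle_{M,\X}$ is equivalent to the standard $L^2$ inner product. If that is too heavy, the fallback is to state the lemma with $M\equiv I$ in the plain $L^2$ inner product and note that the general case only changes the Riesz map.
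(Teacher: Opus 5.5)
Your proposal is correct and follows essentially the same route as the paper: fix $s$, take variations with $h(0)=h(T)=0$, and integrate by parts to get the Euler--Lagrange expression. You then identify the gradient through the $M$-weighted inner product $\langle a,b\rangle_M=\int_0^T a^\top M b\,dt$, which gives $\nabla\mathcal{A}(\X_s)=M^{-1}\bigl(\partial L/\partial \X_s-\tfrac{d}{dt}\partial L/\partial\dot{\X}_s\bigr)$; your remarks on needing $M\succ 0$ (and symmetric), on the base-point dependence of the metric, and on the reduction to Definition~\ref{defn:aug_primal_dual_flow} go beyond the paper's terse proof (though that reduction needs $G\equiv 0$, not merely $p=0$, for $\lambda\equiv 0$ to be consistent with $\lambda'=G(x)$).
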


\begin{pf}
Fix $s$ and suppose $\X_s(0), \X_s(T)$ are held fixed. 
Compute the Fr\'echet Derivative of $\Act$ at $\X_s$ in a direction $h \in X$ satisfying $h(0) = h(T) = 0$. 
Integration by parts yields the Euler--Lagrange expression
\begin{equation}
\begin{split}
    \delta\Act(\X_s)[h] = \int_0^T \Bigl( &\dldxs(\X_s(t),\dot{\X}_s(t)) \\
    &- \ddt \dldxds(\X_s(t),\dot{\X}_s(t)) \Bigr)^T h(t)\, dt
\end{split}
\end{equation}
Introducing the weighted inner product $\langle a, b \rangle_M := \int_0^T a(t)^T \allowbreak M(\X(t,s))\, b(t)\, dt$, we identify the Hilbert gradient (Definition~\ref{defn:gradient_in_hilbert}) as the unique element $\nabla\Act(\X_s) \in X$ satisfying $\delta\Act(\X_s)[h] = \langle \nabla\Act(\X_s),\, h \rangle_M$ for all admissible $h$. 
Comparing the two expressions gives the desired result. \qed
\end{pf}

\subsection{Ensuring the AGHF coincides with the (OCP)}
\label{subsec:aghf_ocp}

To ensure that the action functional being minimized coincides with the (OCP) in \eqref{eqn:ocp}, we must design $L$ carefully.
First, define the \emph{Lagrangian} as:
\begin{align}
\label{eqn:lagrangian}
L(\X_s(t),\dot{\X}_s(t))
&= \left(\dot{\X}_s(t) - F_d(\X_s(t))\right)^T
   J(\X_s(t)) \notag \\
&\quad \cdot \left(\dot{\X}_s(t) - F_d(\X_s(t))\right)
\end{align}
where $J: \R^{2N} \to \R^{2N \times 2N}$ is a user-specified matrix. 

Now, if we assume $c(x(t), u(t)) = \|u(t)\|^2$ in \eqref{eqn:ocp}, then we can design $J$ by applying \cite[Theorem 1]{AGHF_OG}.
For a general cost function one can follow \cite[Definition 6]{adu2025phasingflamesrapidmotion}.

Under Assumption~\ref{assum:dynamics continuity}, \cite[Theorem~1]{AGHF_OG} specifies a control extraction procedure and establishes that under mild conditions the resulting trajectory is arbitrarily close to a feasible trajectory of (OCP). 
In fact, \cite[Theorem~1]{AGHF_OG} provides an explicit bound quantifying this closeness. 
We omit these details here for space reasons.

\section{The Augmented Lagrangian Affine Geometric Heat Flow}
\label{sec:alm_aghf}
This section extends the AGHF equation presented in Definition \ref{defn:AGHF} to handle constraints.
Note that Lemma \ref{lem:aghf_gf_rel} allows us to interpret the AGHF as a Gradient Flow.
We use Definition \ref{defn:aug_primal_dual_flow} to construct an augmented primal-dual AGHF.
Building upon the AGHF framework established in Section \ref{sec:AGHF}, define a modified Lagrangian that incorporates constraint information:
\begin{align}
\label{eqn:auglagrangian}
L'(\X_s(t), \dot{\X}_s(t), \mu_s(t))
&= L(\X_s(t), \dot{\X}_s(t))
  \notag \\ + \mu_s(t)^T W(g(\X_s(t))) 
&\quad + \frac{p}{2} \|W(g(\X_s(t)))\|_2^2,
\end{align}
where $\R \ni p > 0$ represents the scalar penalty parameter, $g(\X_s(t)) \in \R^{n_c}$ with $n_c$ the number of constraints and the function $W$ is applied element-wise.
In the case of equality constraints, $W$ is the identity and in the case of inequality constraints let $W(g(\X_s(t))) = \max(0, g(\X_s(t)))$.
Let $W^{\ccons}$ denote the smooth approximation of $W$ for inequality constraints.

By adapting Definition \ref{defn:aug_primal_dual_flow}, one can build the following:
\begin{defn}(MAGMA)
The augmented primal-dual AGHF is a partial differential equation that is defined as:
\begin{subequations}
\begin{align}
\dxds(t,s) &= \Minv(\X(t,s)) \bigg(
   \ddt \dlpdxds(\X_s(t),\dot{\X}_s(t), \mu_s(t)) \notag \\
   &\quad - \dlpdxs(\X_s(t),\dot{\X}_s(t), \mu_s(t))
   \bigg) \label{eqn:magma_primal} \\
\frac{\partial \mu_s(t)}{\partial s}
   &= \frac{\partial L'}{\partial \mu_s}
      (\X_s(t),\dot{\X}_s(t), \mu_s(t)) \label{eqn:magma_dual}
\end{align}
with the following boundary conditions:
\begin{align}
\X_s(0) &= \x_0,  \quad \forall s \in [0,s_{max}] \\ 
\X_s(T) &= \x_f, \quad \forall s \in [0,s_{max}]
\end{align}
and where $M(\X(t, s)) \succeq 0$.
\end{subequations}
\end{defn}

\section{Experiments and Results}
\label{sec:experiments}

\begin{table*}[t]
\centering
\caption{Per-scenario objective cost and solve time (seconds) across 6 sphere obstacle scenarios and 8 hard task-based scenarios.
\textbf{F} denotes failure to satisfy the success criteria of Section~\ref{sec:experiments} within the time budget (60\,s for sphere, 120\,s for hard scenarios).
PHLAME failed on all 14 scenarios and is omitted.
Success rate is the fraction of scenarios solved successfully.}
\label{tab:combined_results}
\footnotesize
\setlength{\tabcolsep}{3.5pt}
\renewcommand{\arraystretch}{1.1}
\begin{adjustbox}{max width=\textwidth}
\begin{tabular}{ll cccccc c cccccccc c c}
\toprule
 & & \multicolumn{6}{c}{Sphere scenarios} & & \multicolumn{8}{c}{Hard scenarios} & & \\
\cmidrule(lr){3-8} \cmidrule(lr){10-17}
Method & Metric & 1 & 2 & 3 & 4 & 5 & 6 & & 1 & 2 & 3 & 4 & 5 & 6 & 7 & 8 & & Success \\
\midrule
\multirow{2}{*}{\textcolor{algreen}{MAGMA}}
 & Cost & 819.7 & 723.9 & 606.8 & 970.1 & 639.8 & 802.2 & & 423.0 & 206.2 & 293.3 & 974.42 & 341.74 & 1085.3 & 830.4 & 565.7 & & \multirow{2}{*}{14/14} \\
 & Time &   8.9 &   6.6 &   5.4 &   7.7 &   8.0 &  10.6 & &  22.3 &  17.8 &  20.5 &  17.5 &  23.4 &   17.2 &  11.9 &  22.1 & & \\
\midrule
\multirow{2}{*}{\textcolor{blazepurple}{BLAZE}}
 & Cost & \textbf{F} & 559.9 & 806.9 & 412.8 & 743.8 & 397.2 & & 456.3 & 168.9 & \textbf{F} & \textbf{F} & 523.9 & 872.2 & 691.5 & \textbf{F} & & \multirow{2}{*}{10/14} \\
 & Time & \textbf{F} &   6.0 &   6.7 &   8.5 &   7.9 &   6.6 & &   9.2 &   6.6 & \textbf{F} & \textbf{F} &   6.8 &   6.9 &   8.0 & \textbf{F} & & \\
\midrule
\multirow{2}{*}{\textcolor{aligatororange}{Aligator}}
 & Cost & 589.4 & 536.9 & 720.7 & 542.2 & \textbf{F} & 483.1 & & \textbf{F} & 187.7 & \textbf{F} & \textbf{F} & \textbf{F} & \textbf{F} & \textbf{F} & \textbf{F} & & \multirow{2}{*}{6/14} \\
 & Time &  32.1 &  19.2 &  41.6 &  30.0 & \textbf{F} &  21.7 & & \textbf{F} &  72.2 & \textbf{F} & \textbf{F} & \textbf{F} & \textbf{F} & \textbf{F} & \textbf{F} & & \\
\bottomrule
\end{tabular}
\end{adjustbox}
\end{table*}

This section evaluates the speed and performance of \methodnamenew{} on a variety of constrained optimization problems.
All code describing the implementation from this section is available at \url{https://github.com/roahmlab/MAGMA}.

\begin{figure*}[t]
    \centering
    \subfloat{\begin{minipage}[t]{0.24\textwidth}
        \centering
        \adjincludegraphics[width=\textwidth, trim={.15\width .15\height .15\width .15\height}, clip]{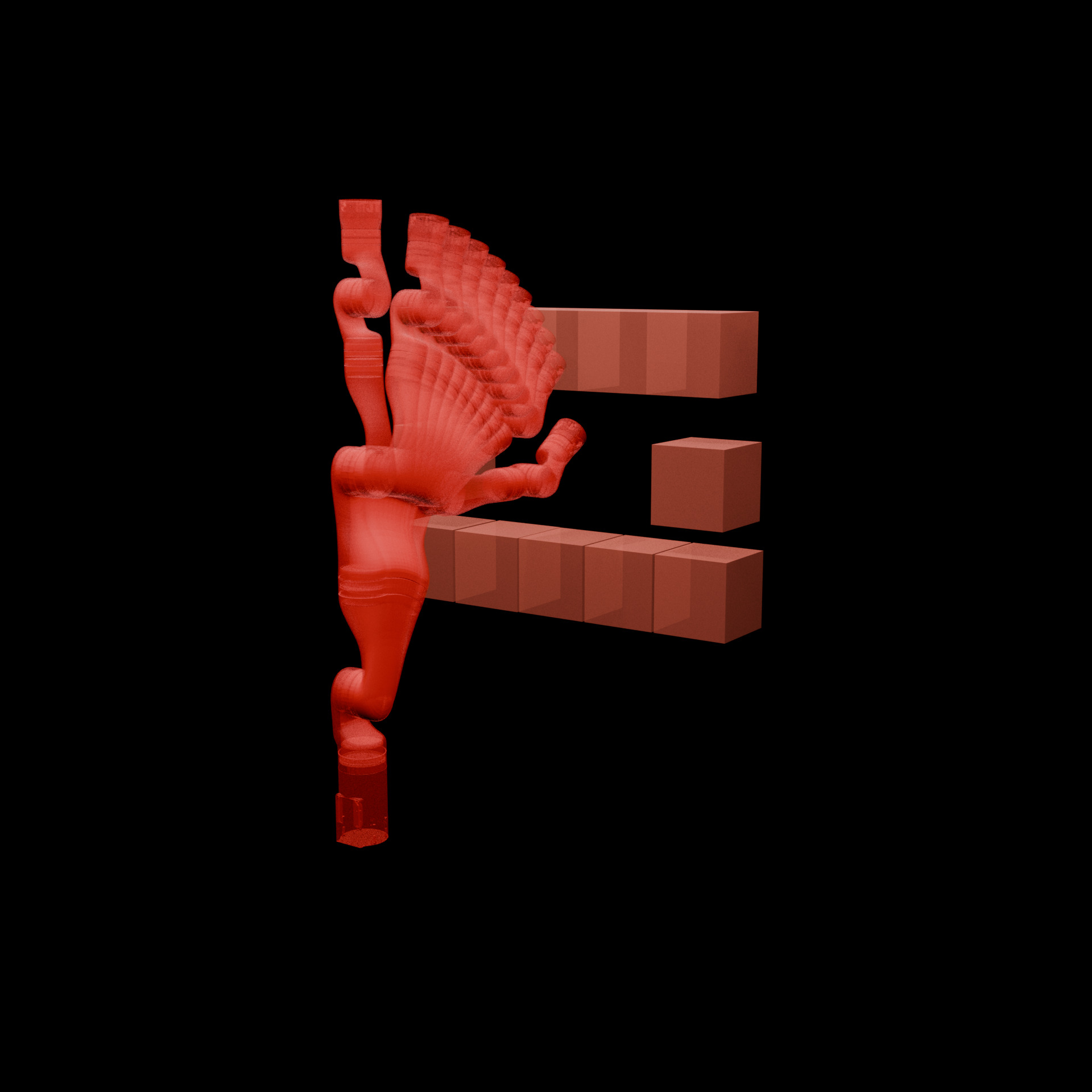}
    \end{minipage}}
    \hfill
    \subfloat{\begin{minipage}[t]{0.24\textwidth}
        \centering
        \adjincludegraphics[width=\textwidth, trim={.15\width .15\height .15\width .15\height}, clip]{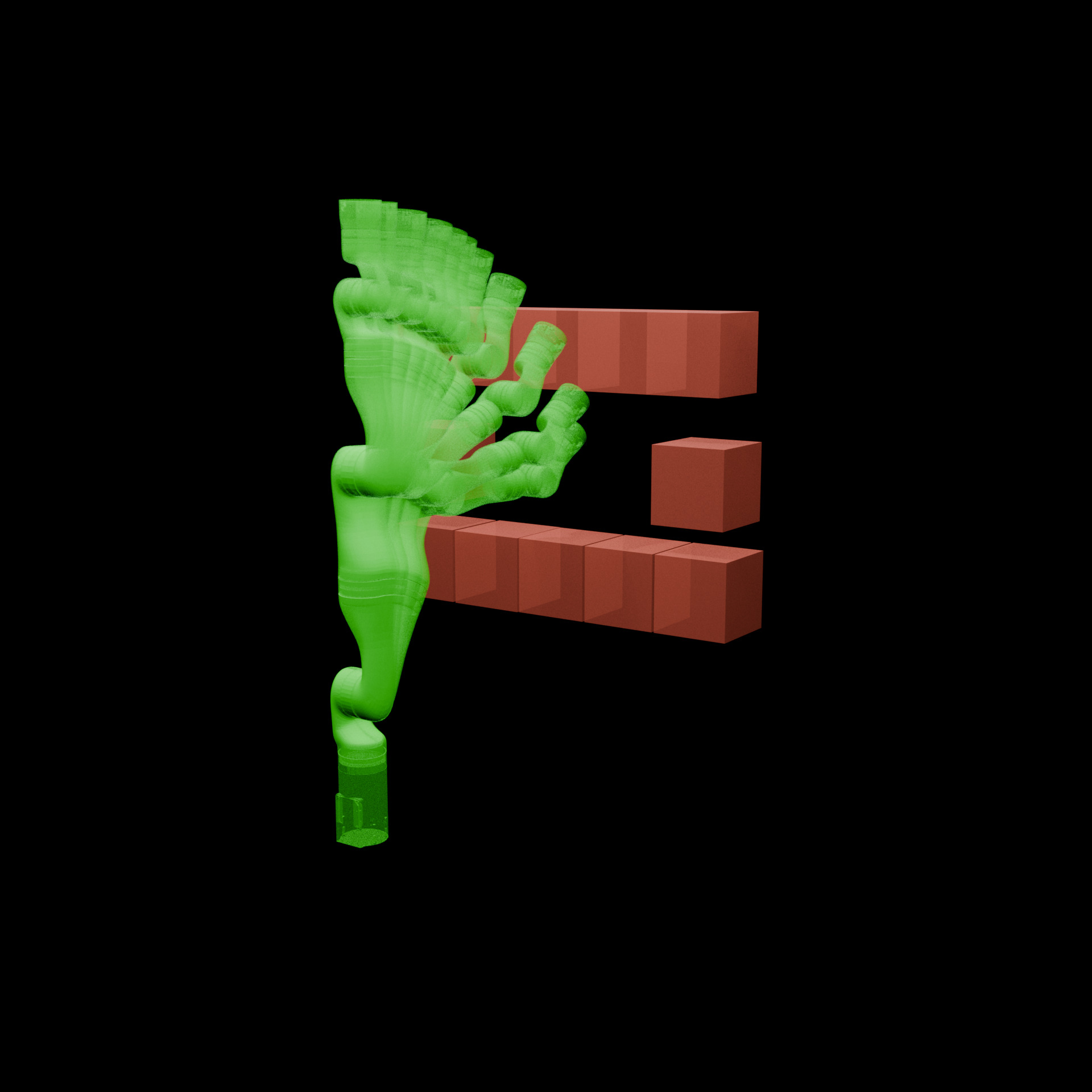}
    \end{minipage}}
    \hfill
    \subfloat{\begin{minipage}[t]{0.24\textwidth}
        \centering
        \adjincludegraphics[width=\textwidth, trim={.15\width .15\height .15\width .15\height}, clip]{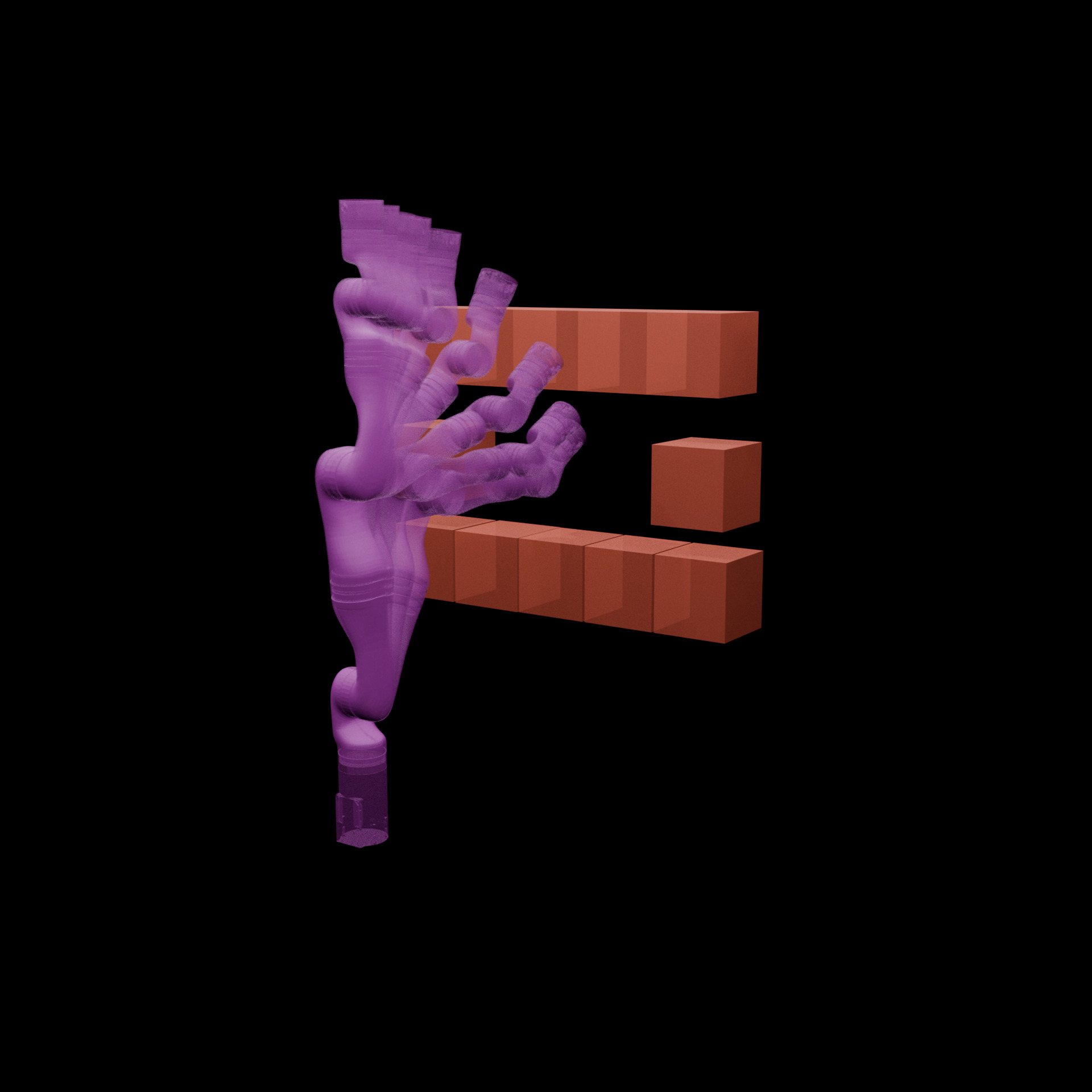}
    \end{minipage}}
    \hfill
    \subfloat{\begin{minipage}[t]{0.24\textwidth}
        \centering
        \adjincludegraphics[width=\textwidth, trim={.15\width .15\height .15\width .15\height}, clip]{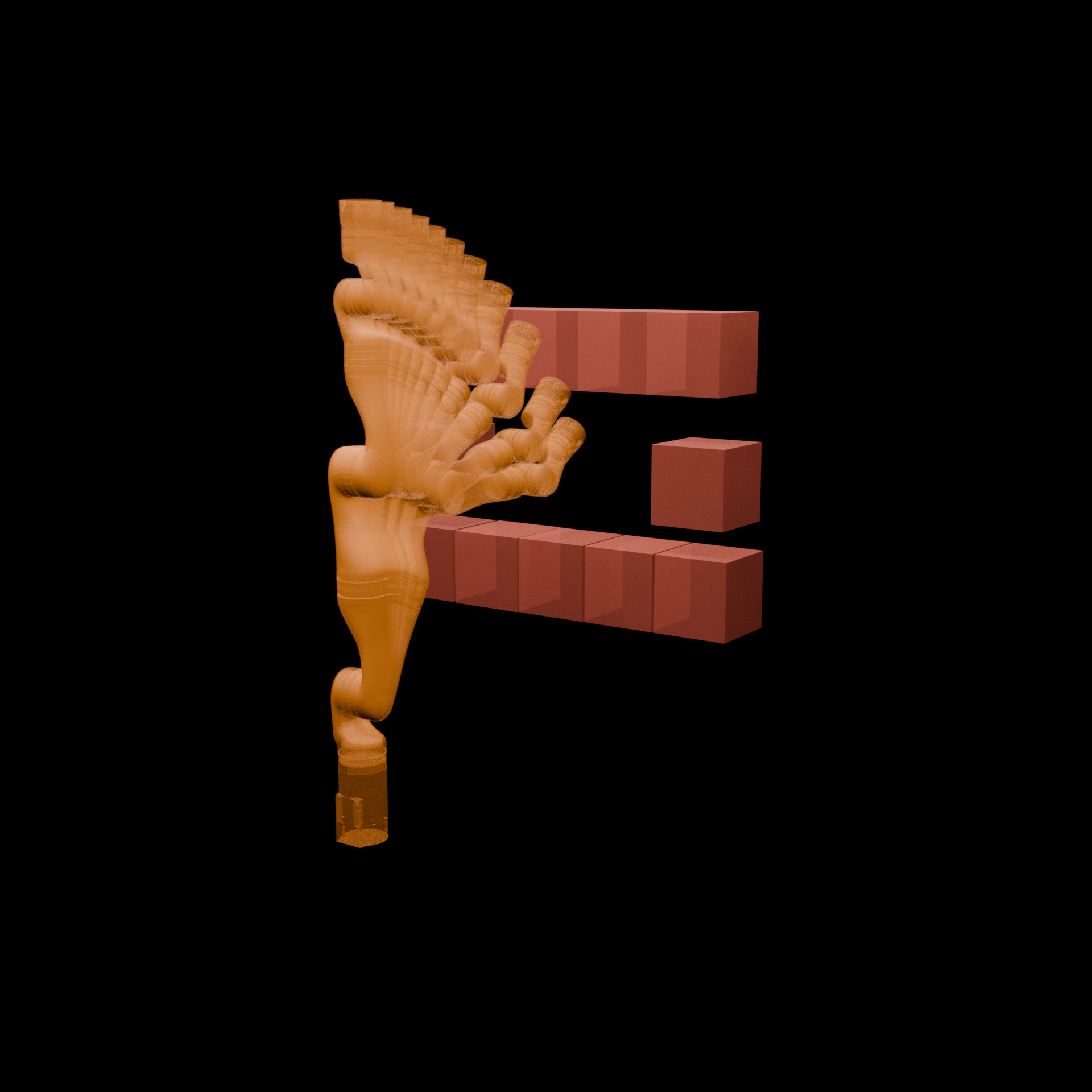}
    \end{minipage}}

    \vspace{2pt}

    \subfloat[\textcolor{red_initial_guess}{\textbf{Initial Guess}}]{\begin{minipage}[t]{0.24\textwidth}
        \centering
        \includegraphics[width=\textwidth]{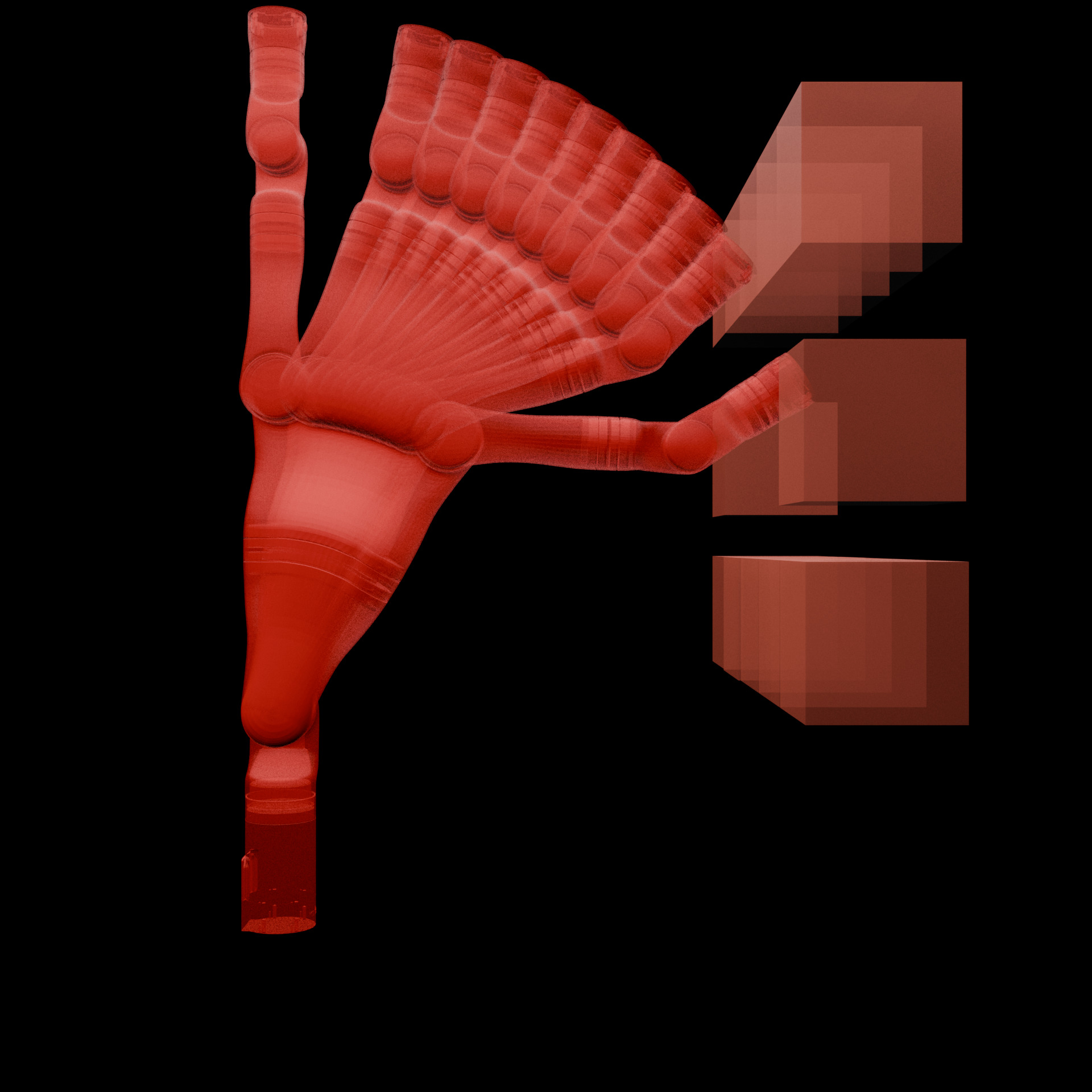}
        
        \label{fig:init3}
    \end{minipage}}
    \hfill
    \subfloat[\textcolor{algreen}{\textbf{MAGMA}}]{\begin{minipage}[t]{0.24\textwidth}
        \centering
        \includegraphics[width=\textwidth]{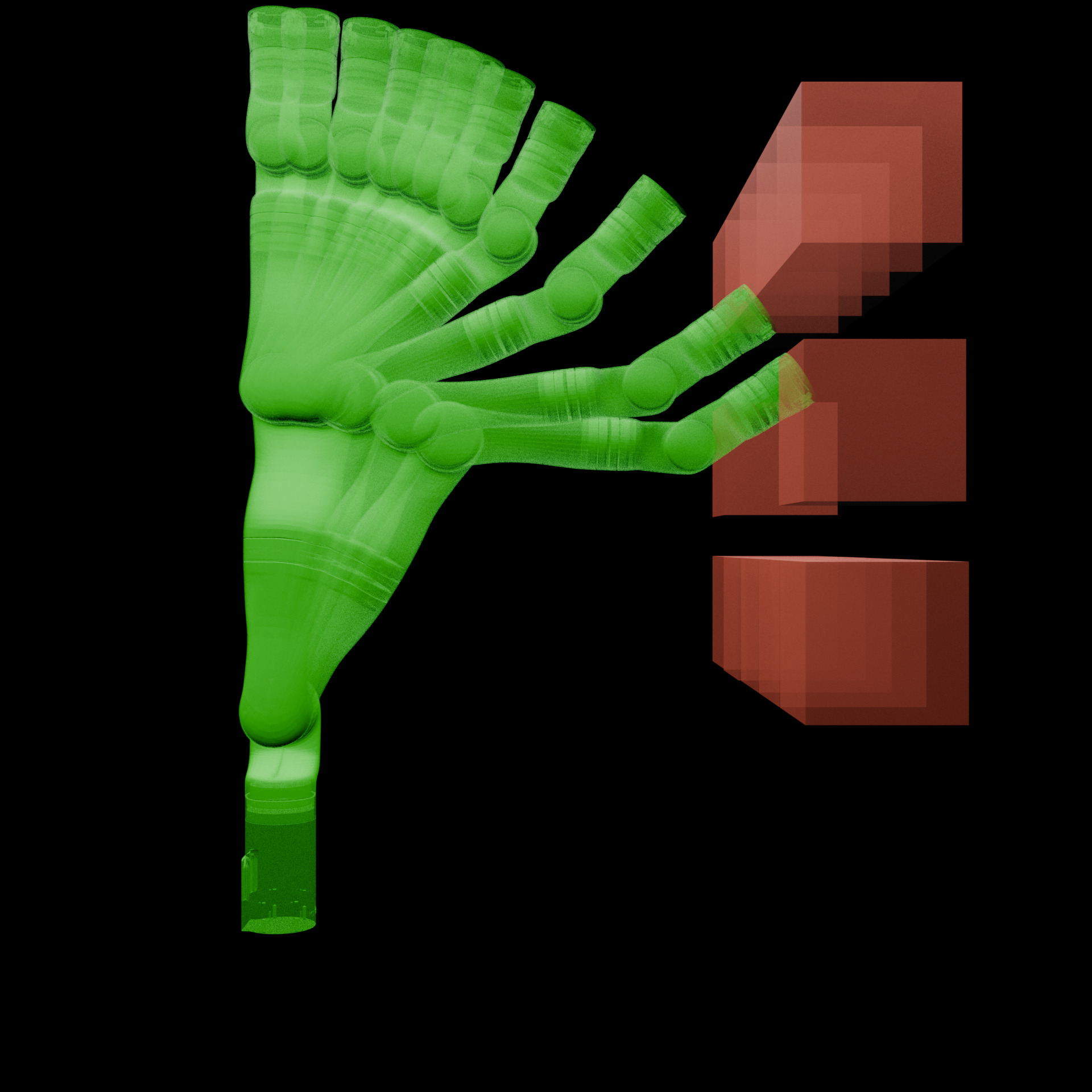}
        
        \label{fig:mg3}
    \end{minipage}}
    \hfill
    \subfloat[\textcolor{blazepurple}{\textbf{BLAZE}}]{\begin{minipage}[t]{0.24\textwidth}
        \centering
        \includegraphics[width=\textwidth]{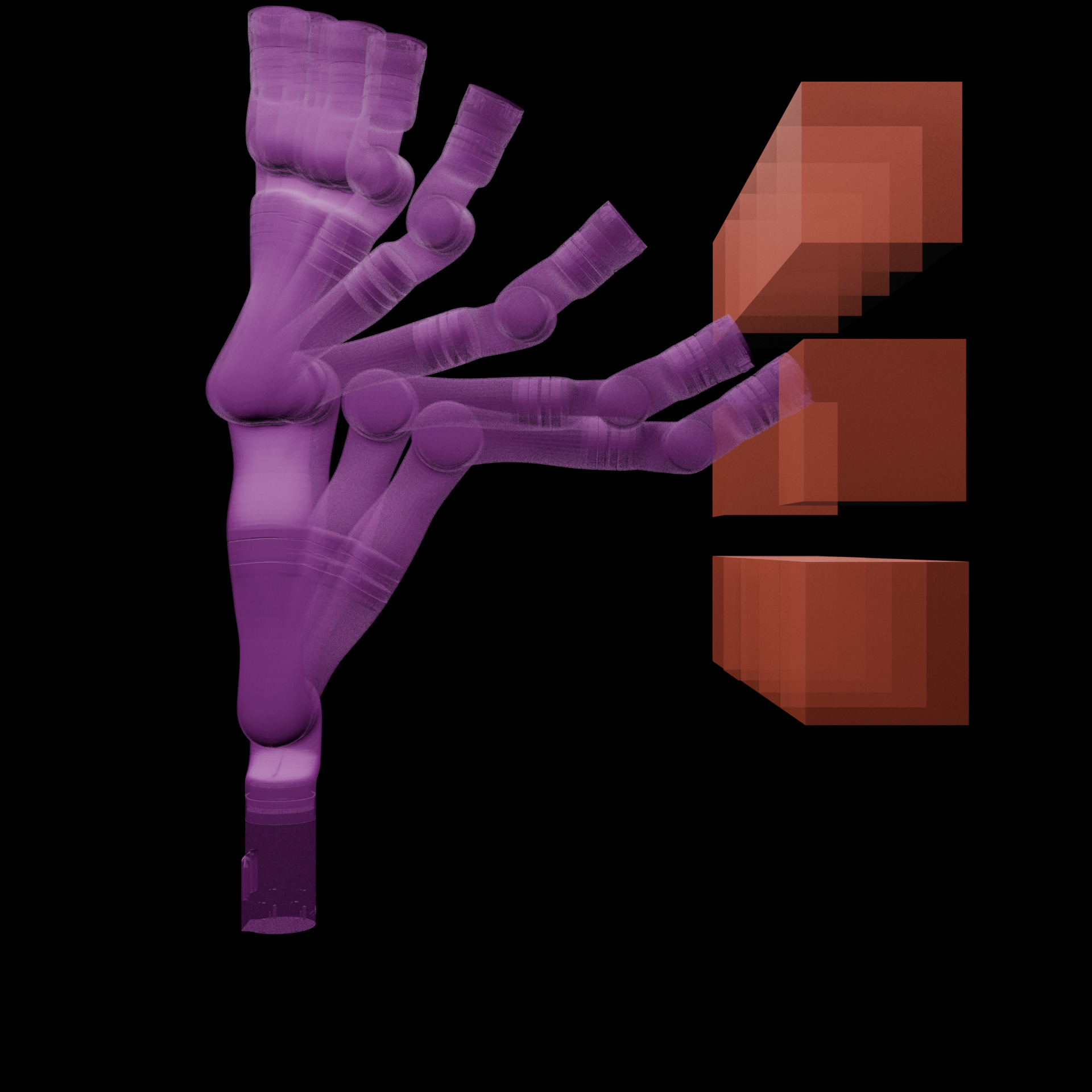}
        
        \label{fig:blaze3}
    \end{minipage}}
    \hfill
    \subfloat[\textcolor{aligatororange}{\textbf{Aligator}}]{\begin{minipage}[t]{0.24\textwidth}
        \centering
        \includegraphics[width=\textwidth]{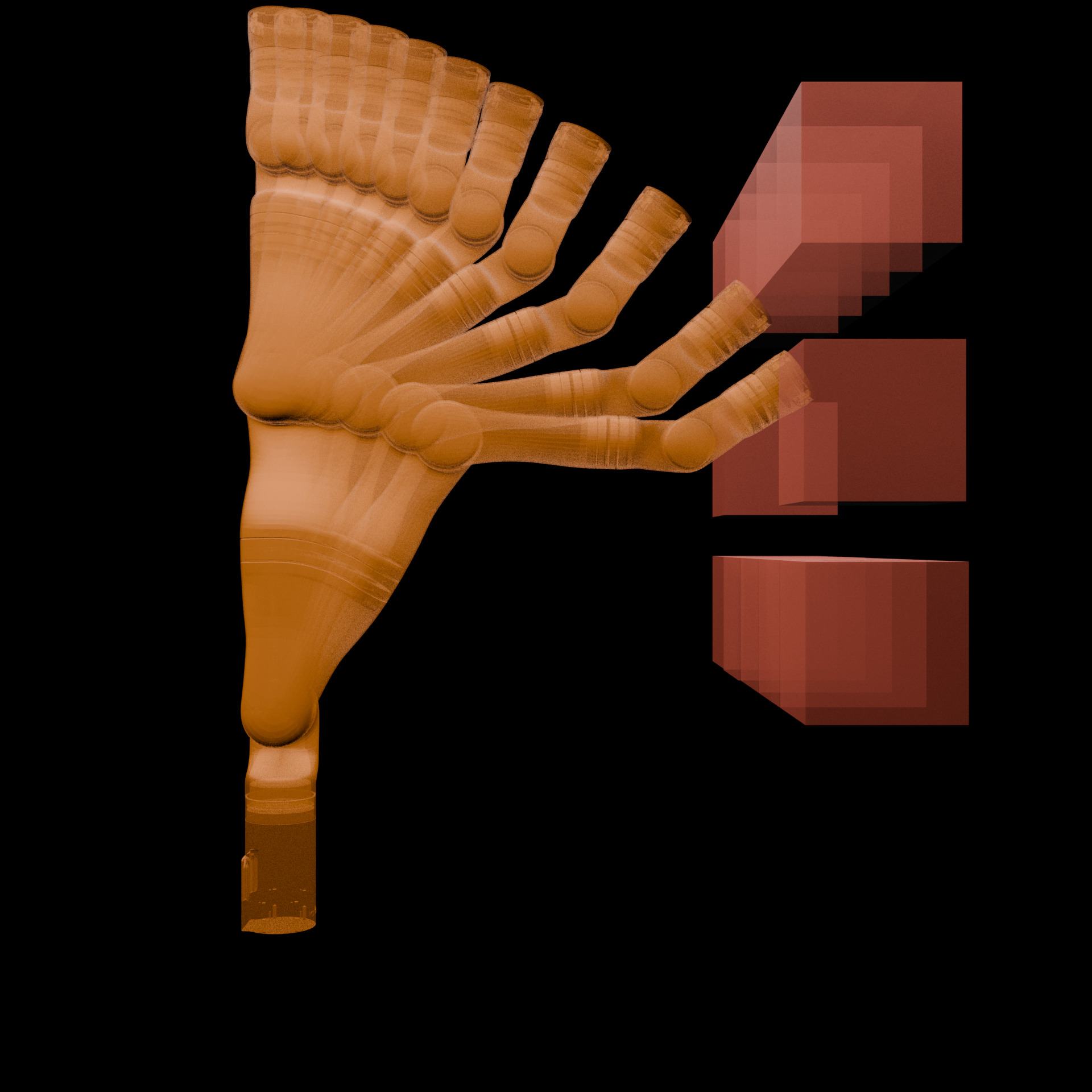}
        
        \label{fig:alig3}
    \end{minipage}}

\caption{Qualitative comparison of the solution trajectories on Hard Scenario~2 (see Table~\ref{tab:combined_results}).
The \textcolor{red_initial_guess}{Initial Guess} (left column) is a straight line trajectory in configuration space that collides with the obstacles.
\textcolor{algreen}{MAGMA} pushes this initial guess out of collision and converges to a feasible solution in 17.8\,s, \textcolor{blazepurple}{BLAZE} does it in 6.6\,s and \textcolor{aligatororange}{Aligator} in 72.2\,s.
The top row shows an isometric view; the bottom row shows a side view of the same trajectories, chosen to make the interaction with the obstacle visible.}
\label{fig:hard_task_scenario_2}
\end{figure*}

\subsection{Implementation of \methodnamenew{}}

The numerical implementation of \methodnamenew{} follows the pseudo-spectral method-of-lines approach adopted in \cite{adu2024bringheatrapidtrajectory} which is summarized in Section \ref{sec:alm_aghf}.
To evaluate the derivatives of $W$ in \eqref{eqn:magma_primal} and \eqref{eqn:magma_dual}, we approximate it by a smooth function $W(y) \approx W^{\ccons}(y) = y \cdot S(y)$ where the activation function is defined by:
\begin{equation}
\label{eq: barrier lagrangian}
    S(y) = \frac{1}{2} + \frac{1}{2} \tanh(\ccons \cdot y)
\end{equation}
where $\ccons$ is a hyper-parameter that determines how fast $S(y)$ transitions from 0 to 1.

\subsection{Experimental Setup}
We compare \methodnamenew{} to the DDP method Aligator \cite{aligator} and the AGHF-based methods BLAZE and \methodname{} \cite{adu2025phasingflamesrapidmotion} which use a penalty based formulation to handle constraints.
In all the problems, we enforce input constraints and state constraints and in problems with obstacles we also enforce obstacle avoidance constraints.
All the experiments were run on an Ubuntu 22.04 machine with an AMD EPYC 7742 64-Core @ 256x 2.25GHz CPU.

\subsubsection{Selection of Parameters}

For the BLAZE and PHLAME baselines, we adopt the parameter configurations reported in \cite{adu2025phasingflamesrapidmotion}.
In contrast, \methodnamenew{} does not require extensive hyperparameter tuning.
A single reasonable choice of the penalty coefficients was sufficient to obtain all reported results; no grid search or parameter sweep was performed.
To ensure fair comparison across methods, all experiments use a common initial guess $\Xinit$ as was described in \cite{adu2025phasingflamesrapidmotion}.

\subsubsection{Evaluating Success or Failure}
\label{subsec: success}
Each numerical method under evaluation produces an open-loop control signal $\uu^*:[0,T] \to \R^m$ along with the corresponding state trajectory $\X^*:[0,T] \to \R^{2N}$.
A key question is whether the robot can faithfully execute these computed solutions during physical operation.
To address this and properly account for integration errors, we evaluate constraint satisfaction by simulating forward dynamics under a feedback controller:
\begin{equation}
    \Ufb(t) = \uu^*(t) +  k_p (\X^*_{P1}(t) - \q(t)) + k_v (\X^*_{P2}(t) - \qd(t)),
\label{eq: feed forward control}
\end{equation}
with gains $k_p = k_v = 100$ applied uniformly across all experiments, where $\q(t)$ and $\qd(t)$ denote the instantaneous robot position and velocity.
We propagate the system dynamics using \eqref{eq: feed forward control} and then assess both tracking performance and constraint compliance.
\revision{}{We adopt this approach mirroring what is done in real-world deployments.}

A trial qualifies as \emph{successful} in obstacle-free scenarios when two conditions hold: the infinity norm between the forward-integrated terminal state and $\X_f$ falls below $\epsilon = 0.05$, and all state values throughout the trajectory stay within ±0.5 of their specified bounds.
For scenarios involving obstacles, success additionally requires that joint positions sampled at $\Delta t = 10^{-2}$s intervals avoid collision with all obstacle regions at every joint frame, assuming buffered obstacles by 0.1\,cm.
Note that when solving the obstacle constraints, all methods are implemented with a 0.1 cm buffer (obstacles are grown by 0.1 cm) to provide a small safety margin during trajectory execution.

\subsection{Kinova Gen3 Sphere Obstacle Avoidance}

This section compares \methodnamenew{}, BLAZE \cite{adu2025phasingflamesrapidmotion}, PHLAME \cite{adu2024bringheatrapidtrajectory} and  Aligator \cite{aligator} in solving the (OCP) in multiple scenarios with sphere obstacles.

In all these experiments we design $\Xinit$ to be in collision.
This allows us to investigate how effective the proposed method is in handling unfeasible initial guesses.

We evaluate each method on 6 randomly generated scenarios, each containing either 5 or 10 obstacles. 
In all cases, the obstacles are placed randomly and the initial guess $\Xinit$ is constructed to be in collision.
We gave a 60s time budget to be able to generate a trajectory.
Being unable to generate the trajectory within that time or not satisfying the earlier success criteria from Section \ref{subsec: success} is counted as an unsuccessful trial.
 Table~\ref{tab:combined_results} summarizes the results.
 Videos illustrating the performance of the various algorithms can be found at \url{https://roahmlab.github.io/MAGMASite/}.
 \methodnamenew{} achieves 100\% success rate.

\subsection{Kinova Gen3 Hard Task-Based Scenarios}

This section compares \methodnamenew{}, {\methodname} \cite{adu2024bringheatrapidtrajectory}, Aligator and BLAZE under a similar setup as the previous subsection, with two key differences. 
First, obstacles are now cuboids arranged to resemble more task-oriented scenarios for manipulators that would be deployed in domestic and industrial settings.
These scenarios involve actions such as reaching into shelves, retracting arms out of shelves and under shelves, and placing items in confined bins, reflecting real-world manipulation tasks.
Figure \ref{fig:hard_task_scenario_2} depicts Scenario 2 comparing the solution of the different methods and  Figure \ref{fig:scenario4} depicts the solution of \methodnamenew{} for Scenario 7.
Second, the time limit for each method is extended to 120s to accommodate scenarios with a greater number of obstacles.
For these experiments each method must generate a 2-second long trajectory, which is a relatively short time to execute many of these scenarios due to how far the robot must move.
This time constraint forces the methods to produce dynamic solutions while simultaneously satisfying state and input constraints, as well as obstacle avoidance, making the scenarios challenging.
A trial is considered unsuccessful if a method fails to generate a trajectory within the allotted time or does not satisfy the earlier success criteria from Section \ref{subsec: success}.

Table \ref{tab:combined_results} summarizes the performance of the three optimal control algorithms for the 8  different realistic scenarios.
\methodnamenew{} demonstrates a higher success rate, completing 100\% of trials.

\begin{figure}[h]
\subfloat{\begin{minipage}{0.48\linewidth}

        \includegraphics[width=1\columnwidth]{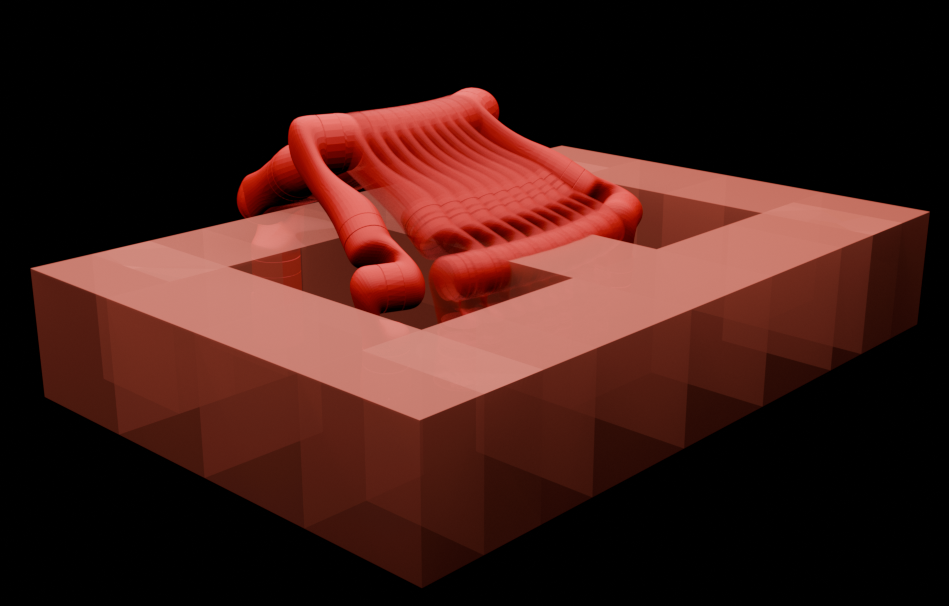}
\end{minipage}}
\hfill
\subfloat{\begin{minipage}{0.48\linewidth}
        \includegraphics[width=1\columnwidth]{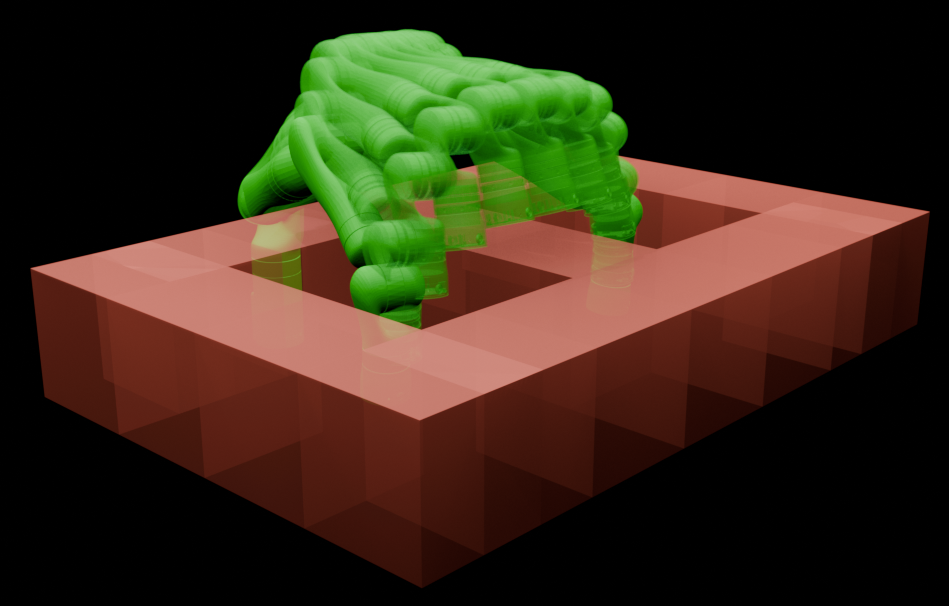}
\end{minipage}}
\caption{This figure shows a visualization of one of the task-based scenarios (Scenario 7). 
The \textcolor{red_initial_guess}{initial trajectory} for the scenario is shown in the left image in red and collides with the obstacles along the path. 
\textcolor{algreen}{MAGMA} is able to push this initial guess out of collision and generate the optimal collision-free solution shown in green in 11.9s for this scenario.}
\label{fig:scenario4}
\end{figure}

\section{Conclusion}
\label{sec:conclusion}
This paper introduces \methodnamenew{}, a trajectory optimization framework that combines an augmented Lagrangian formulation with the Affine Geometric Heat Flow to address constrained optimal control problems. 
By leveraging an augmented primal–dual gradient flow, the proposed method reduces reliance on manual penalty tuning while providing a systematic mechanism for balancing feasibility and optimality. 
To do this, the paper establishes a connection between gradient flows and KKT conditions in Hilbert Spaces, and demonstrates through experiments on high-dimensional robotic systems that the method achieves competitive robustness, accuracy, and computational performance relative to existing approaches.

\bibliographystyle{IEEEtran}
\bibliography{references}


\end{document}